\documentclass[11pt, a4paper, copyright]{simular}

\usepackage{amsmath,amsfonts,bm}

\def\eqref#1{equation~\ref{#1}}

\def\1{\bm{1}}

\DeclareMathAlphabet{\mathsfit}{\encodingdefault}{\sfdefault}{m}{sl}
\SetMathAlphabet{\mathsfit}{bold}{\encodingdefault}{\sfdefault}{bx}{n}

\usepackage{tcolorbox}
\usepackage{fvextra}
\usepackage{wrapfig}
\usepackage{multirow}
\usepackage{subcaption}
\usepackage{xspace}

\theoremstyle{plain}
\newtheorem{proposition}{Proposition}
\newtheorem{lemma}{Lemma}

\newif\ifcomments
\commentstrue   

\newcommand{\method}{\emph{LEASH}}
\newcommand{\pass}{\textup{\textsc{Pass}}}
\newcommand{\fail}{\textup{\textsc{Fail}}}

\definecolor{ourscolor}{RGB}{245,239,252}
\definecolor{opuscolor}{RGB}{235,244,252}

\title{LEASH: Uncertainty-Aware Weak-to-Strong Escalation and Oversight for Computer-Use Agents} 

\author[1]{Qingni Wang\textsuperscript{\dag}}
\author[2]{Changdae Oh}
\author[1]{Yuzhe Yang\textsuperscript{\dag}}
\author[2]{Sharon Li}
\author[1]{Jiachen Yang}
\author[1]{Ang Li}
\author[1]{Xin Eric Wang}
\affil[1]{Simular}
\affil[2]{University of Wisconsin--Madison}
\makeatletter
\g@addto@macro\AB@affillist{\\[3pt]\Affilfont\textsuperscript{$\dagger$}Work done while interning at Simular.}
\makeatother

\begin{abstract}
Computer-use agents increasingly tackle long-horizon tasks, where running a frontier model at every step is expensive and a small open-weight model alone often fails. \emph{Weak-to-strong escalation} offers a middle ground where a weak agent acts by default and hands control to a strong one only when needed.
The core challenge is deciding when to escalate, which requires an uncertainty signal cheap enough to check at every step. However, even after handoff, the strong agent can still get lost, but a black-box API that the strong agent is usually served exposes no internals to monitor.
To this end, we propose \method{}, a training-free framework that unleashes the strong agent only when the weak agent's latent uncertainty signals it is needed, and keeps it on a leash afterward.
\method{} has three layers: 
(1) initial task uncertainty for early routing, (2) step-level uncertainty for runtime escalation, and, after handoff, (3) a shadow observer for continued oversight of the strong black-box agent.
Here, we use \emph{latent uncertainty} as a fuel for all three layers. Specifically, we compare the weak model's hidden states with those from its past successful and failed trajectories, and a state that looks more like the failures signals higher uncertainty.
During weak-agent execution, our uncertainty estimator reuses activations already produced during action generation, so it needs no extra forward pass or dedicated monitor training.
Experiments on OSWorld across four contemporary model backbones show that \method{} consistently improves over standalone weak agents and outperforms random escalation by 6.55--11.94 points at matched escalation rates. With Qwen3.5-35B-A3B, \method{} achieves 73.66\% task success at \$0.4117 per task, matching always-on Claude Opus 5 (72.90\%) while reducing inference cost by 19.2\%.
\end{abstract}

\usepackage[authoryear, sort&compress, round]{natbib}
\uselogo{}
\renewcommand{\today}{}

\begin{document}
\maketitle

\begin{figure}[h]
    \centering
    \includegraphics[width=0.85\linewidth]
    {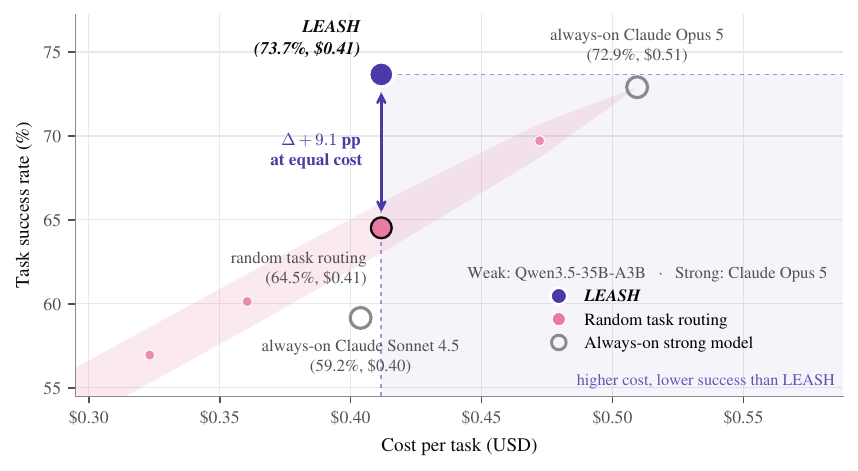}
    \caption{Accuracy--cost comparison between \method{} and random task-level routing on Qwen3.5-35B-A3B. The area spanned by random routing is obtained from repeated random task allocations across different escalation budgets; the band shows $\pm1$ standard deviation over 10,000 allocations. At matched inference cost, \method{} achieves 9.1 percentage points higher task success.}
    \label{fig:random_frontier}
\end{figure}

\section{Introduction}
Computer-use agents (CUAs) automate tasks of graphical interfaces by translating natural-language instructions and visual observations into executable actions~\citep{xie2024osworld,cheng2024seeclick}, and are increasingly deployed for complex, long-horizon automation tasks that can take more than an hour on average~\citep{abhyankar2026osworld}. While frontier proprietary models have been shown to achieve impressive performance leaps~\citep{openai2026gpt6astra,anthropic2026opus55}, running every single task spanning hundreds of steps with these models raises cost concerns, whereas a light open-weight model alone performs poorly on even much simpler tasks~\citep{xie2024osworld}. 
This creates a fundamental trade-off between accuracy and cost efficiency, motivating \textit{weak-to-strong escalation} (W2SE), in which a weak agent serves as the primary executor and hands control to a strong agent only when needed~\citep{wei2026step,lei2026tacit}.
W2SE raises two important questions: (1) When should the weak agent escalate, and (2) who oversees the strong agent after handing over?

Existing work on W2SE primarily focuses on the first question, estimating the weak agent's failure risk to decide when escalation is needed. Some of them adopt another monitoring agent to verify each step~\citep{wang2024mobile,yi2026ecoagent}, others train an extra monitoring module on the collection of annotated trajectories~\citep{wei2026step,lei2026tacit}. However, the former inevitably adds nontrivial cost to run another heavy agent step-by-step, whereas the latter creates another phase of data annotation and training, hindering direct adoption in real-world applications under varying deployment constraints. While training-free uncertainty estimation-based approaches might avoid the aforementioned bottlenecks~\citep{oh2026uncertainty}, prior works still rely on external knowledge base~\citep{liu2026adaptive}, another reference model~\citep{oh2026neglected}, or multiple sampling for each step~\citep{wang2026safeground} quickly becomes infeasible or impractical in long-horizon tasks in the wild.

Besides, escalation alone does not eliminate execution uncertainty, resulting in the second important question in W2SE which is commonly overlooked. That is, even after a stronger agent takes control, subsequent actions may deviate from the intended task or fail to achieve the desired outcome. However, monitoring a black-box strong agent is challenging because its
internal states are inaccessible and verbalized confidence can be unreliable on evolving context of long-horizon task~\citep{han2024towards,kirchhof2026selfreflect}.

To answer both questions, we formulate W2SE as a budget-constrained sequential stopping problem, and use the weak agent's failure risk as a proxy for the expected benefit of escalation. We then propose \textbf{L}atent uncertainty for \textbf{E}scalation \textbf{A}nd \textbf{SH}adow oversight (\method{}), a training-free framework driven by the uncertainty signal in the weak agent's hidden states. First, we record its hidden states of prompt and action tokens on a small set of held-out tasks and label them with task-level and subgoal-level outcomes to construct a representation bank. This representation bank then serves as a semantic anchor to score the newly faced prompts and actions during test-time by measuring, under a Mahalanobis metric in latent space, how much closer each one lies to past failures than to past successes. 
\method{} puts this signal on two jobs (Figure~\ref{fig:overview}): For \emph{escalation}, an initial gate scores uncertainty from the task prompt query, while a runtime gate monitors subsequent execution trace. For \emph{oversight}, after the weak model hands the control to the strong agent, it switches its role from actor to observer that monitors the strong agent's trajectory every three steps; when its uncertainty is high, the pending action is rolled back and re-decided once with higher reasoning effort.  
Importantly, our uncertainty-driven W2SE formulation draws on an easily accessible resource, the weak agent's own past experience~\citep{silver2025welcome}, removing reliance on an external dataset or additional training without hosting another expensive agent during runtime, while offering extensibility for recursive self-improvement~\citep{schmidhuber2007godel,zelikman2022star,madaan2023self,NEURIPS2023_1b44b878,wang2023voyager}.

We evaluate \method{} on OSWorld~\citep{xie2024osworld} with four open-weight weak agents and Claude Opus~5~\citep{anthropic2026opus5} as the strong agent. At matched escalation rates, \method{} outperforms random escalation by 6.55--11.94 points. We further measure how much of the performance gap between the weak agent alone and always-on Opus is recovered by escalation, following the performance-gap metric of~\citep{burns2023weak}.  \method{} recovers 75--102\% of this gap, against 37--69\% for random escalation, while costing 69--81\% as much as always-on Opus. With Qwen3.5-35B-A3B~\citep{qwen35blog}, \method{} matches always-on Opus (73.7\% vs.\ 72.9\% success) at 19\% lower cost, whereas random escalation at equal cost scores about 9 points lower (Figure~\ref{fig:random_frontier}). 
\begin{figure}[t]
    \centering
    \includegraphics[width=\linewidth]{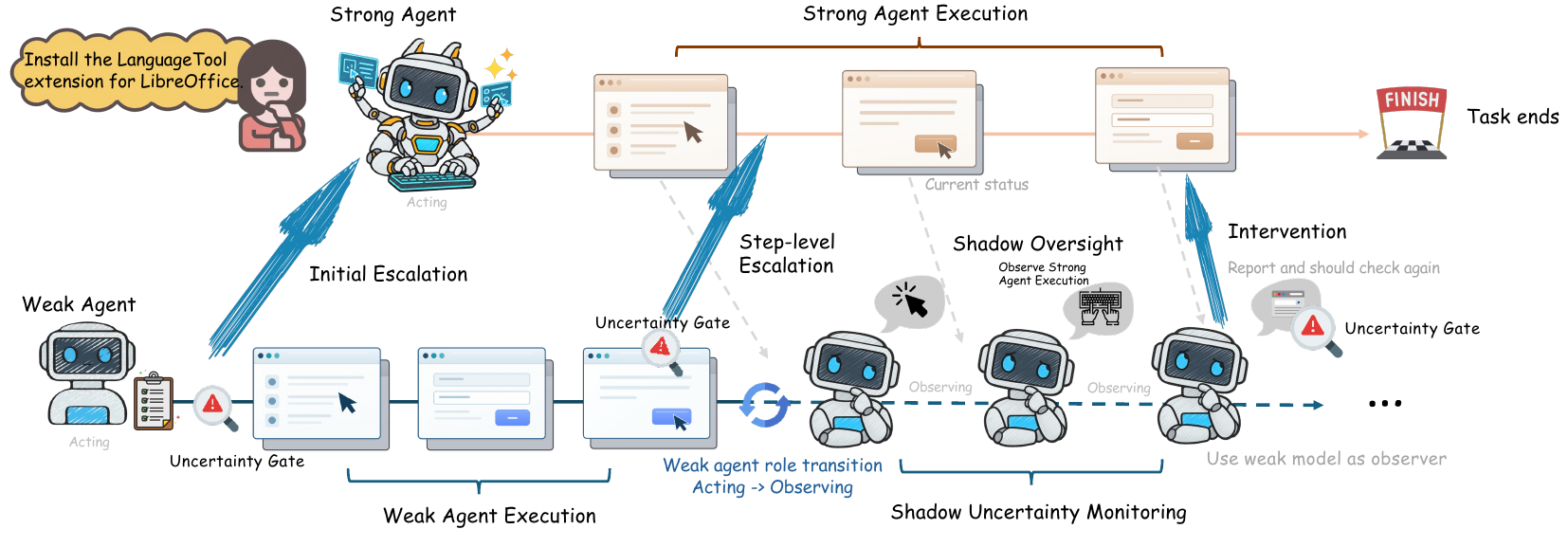}
    \caption{Overview of uncertainty-aware weak-to-strong escalation and post-handoff oversight. A weak agent goes through the task until its uncertainty scores trigger initial or step-level escalation; otherwise, the weak agent continues. After escalation is triggered, the strong agent retains control while the weak agent changes its role from an actor to a shadow observer for continued monitoring. The feedback path from weak observer to strong actor enables uncertainty-triggered reflection and intervention. In short, \method{} unleashes the strong agent only when the weak agent's latent uncertainty signals it is needed, and keeps it on a leash afterward.}
    \label{fig:overview}
\end{figure}

Our contributions are summarized as follows:
\begin{itemize}
    \item We formalize weak-to-strong escalation as a budget-constrained sequential stopping problem and show that one latent uncertainty signal can drive both escalation and oversight.
    \item We propose a training-free, scalable uncertainty scoring method that measures how much closer a hidden state lies to the weak agent's own past failures than to its past successes, and provide theoretical justification for using this score for escalation.
    \item We extend this uncertainty signal to weak-to-strong oversight, in which the white-box weak agent monitors the black-box strong agent through its own hidden states.
    \item On OSWorld with four open-weight backbones, we show that \method{} outperforms random escalation at matched escalation rates and recovers 75--102\% of the escalation gap at 69--81\% of the cost of using always strong agent.
\end{itemize}

\section{Related Work}
\paragraph{Efficient Computer-Use Agents.}
Model routing and cascading offer a promising way to balance inference cost and performance, as demonstrated by FrugalGPT and RouteLLM~\citep{chen2023frugalgpt,ong2025routellm,pmlr-v267-dekoninck25a}.
Extending this idea to GUI automation, Adaptive VLM Routing selects models based on action difficulty and confidence~\citep{liu2026adaptive}, while StepWise trains lightweight monitors to detect execution stalls and identify milestones for escalation~\citep{wei2026step}.
Our work instead leverages the backbone agent's internal representations to guide both task-level routing and step-level escalation, without training a dedicated monitoring network.

\paragraph{Weak-to-Strong Paradigms.}
Weak-to-strong generalization studies how stronger models learn from weaker supervision~\citep{burns2023weak,somerstep2025transfer,medvedev2025weak,xue2025representations,jung2025trust}. These works primarily concern training-time transfer of supervision or capabilities from a weaker model to a stronger one.
Here we study the complementary inference-time setting of \emph{weak-to-strong escalation} (W2SE), where a weak model executes by default and selectively invokes a stronger model when additional capability is needed.
This shifts the weak--strong relationship from supervision transfer to sequential resource allocation. We derive escalation signals from the agent's own past experience without training an additional monitor.

\paragraph{Uncertainty Estimation and Agent Monitoring.}
Uncertainty estimation of agentic systems has emerged as one of the most important problems to monitor and guide agent decisions~\citep{oh2026uncertainty,chun2026comet,han2024towards}, while SafeGround and Argus investigate uncertainty estimation for GUI grounding, including calibrated predictions and representation-based methods~\citep{wang2026safeground,kumar2026uncertainty}.
Sampling-based uncertainty estimation and risk control have also been studied for question answering~\citep{duan2025uprop,wang2025sample}, but repeated generation can be costly when applied at every step of a long-horizon agent trajectory.
Meanwhile, another line of work explores weak-to-strong oversight of agent misbehavior and failure monitoring through observable trajectories~\citep{pmlr-v235-greenblatt24a,kale2026reliable,pan2026monitoring}.
Rather than focusing on either escalation or oversight alone, we handle both problem in a single unified fueled by a simple uncertainty estimate. That is, we integrate representation-based uncertainty estimation into the execution workflow, reusing the backbone model's hidden states for escalation and extending the same signal to post-handoff oversight of a closed-source strong agent.

\section{Method}
\label{sec:method}

We introduce \method{}, a training-free uncertainty-driven agentic inference framework for W2SE and post-handoff oversight (Figure~\ref{fig:overview}). By comparing the weak model's runtime hidden states with successful and failed reference executions, we estimate uncertainty at task initialization and during execution, which guides both escalation to the strong agent and continued oversight after handoff. We first formulate the problem in Section~\ref{sec:problem} and introduce each component in subsequent sections.

\subsection{Problem Formulation}
\label{sec:problem}

Given a task query, a computer-use agent generates an action conditioned on the query, the screen, and history. We consider a lightweight, weak agent, e.g., a relatively small open-weight model with accessible hidden states, and a large, strong agent, e.g., a proprietary black-box frontier model, sharing a budget of $T$ actions. In addition to these two actor agents, we also host a Manager agent (with a strong model) who fires at the beginning to decompose the task and subsequently at subgoal boundaries to verify progress, constructing a minimal version of multi-agent workflow~\citep{agashe2025agents,agashe2025agents2,koh2026multi}. Throughout execution, the Manager decomposes the task query into subgoals with screenshot-verifiable completion criteria and decides whether to advance, keep, or revise each subgoal (Appendix~\ref{app:prompts}). This decomposition naturally provides finer-grained supervision~\citep{pmlr-v306-chun26b}, allowing execution steps to inherit subgoal-level outcomes rather than relying solely on the final task-level label.

We adopt the weak model as a starting agent for all tasks until control is transferred to the strong model by hitting some flags. Then, the strong model runs over the remaining action budget until the end of the task. Let $x\sim\mathcal{D}$ denote a task drawn from the task distribution, and let $\pi$ denote an escalation policy. For each task $x$, policy $\pi$ induces a stopping time $t_{\mathrm{esc}}^\pi(x)\in\{0,1,\ldots,T\}$, where $t_{\mathrm{esc}}^\pi(x)=0$ denotes the earliest escalation opportunity and $t_{\mathrm{esc}}^\pi(x)=T$ denotes no escalation. Since the strong agent dominates per-step cost, we use the total escalation rate, $p_{\mathrm{esc}}$, as a budget proxy and want to maximize task success under limited budget:
\begin{equation}
\max_{\pi}\;
\mathbb{E}_{x\sim\mathcal{D}}\!\left[S^\pi(x)\right]
\quad\text{s.t.}\quad
\Pr_{x\sim\mathcal{D}}\!\left(t_{\mathrm{esc}}^\pi(x)<T\right)
\le p_{\mathrm{esc}},
\label{eq:objective}
\end{equation}

\begin{wrapfigure}[15]{r}{0.6\textwidth}
    \centering
    \includegraphics[width=\linewidth]{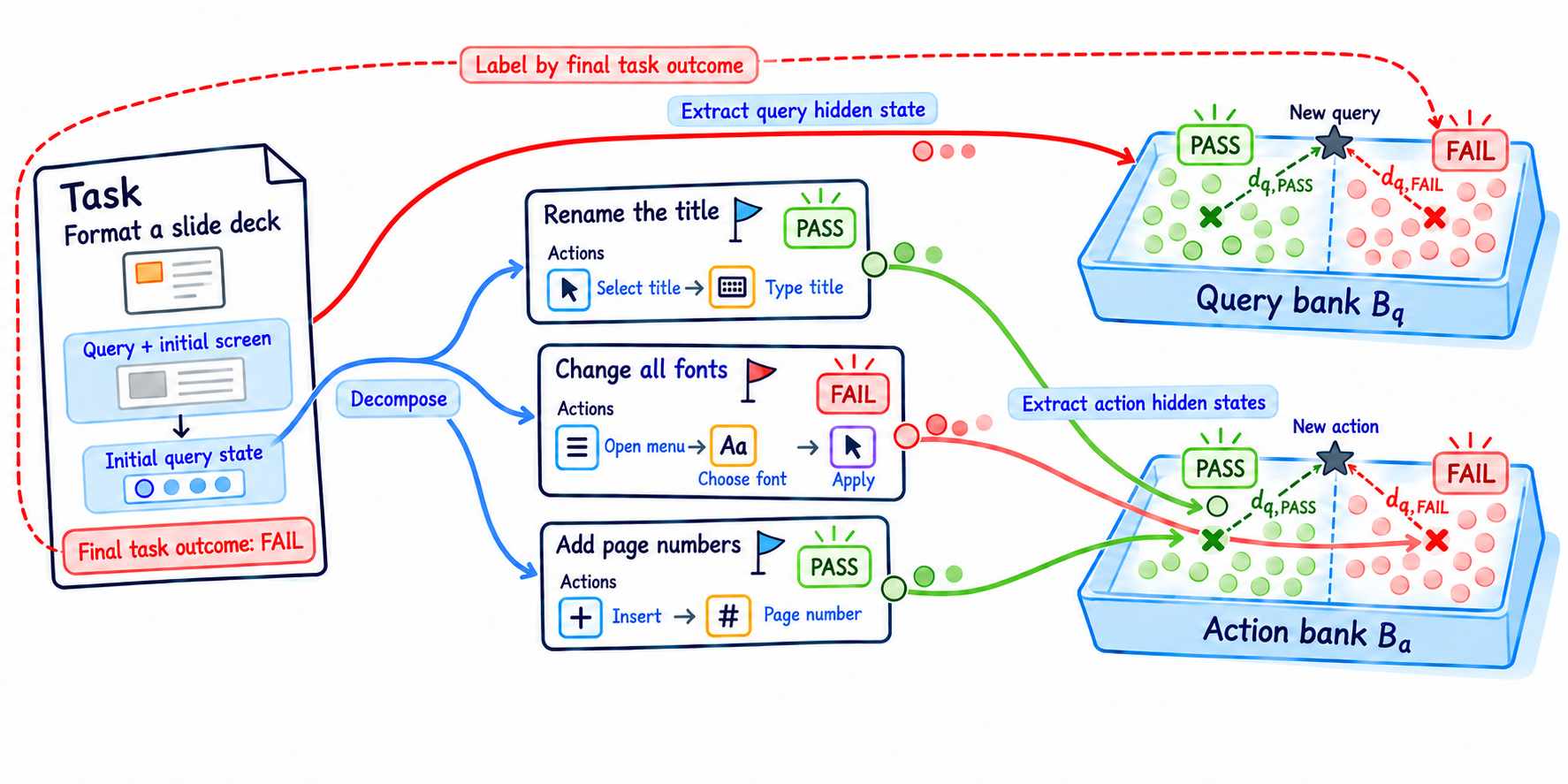}
    \caption{Construction of the query and action reference banks for latent space uncertainty estimation.}
    \label{fig:uncertain}
\end{wrapfigure}

where $S^\pi(x)\in\{0,1\}$ indicates task success under escalation policy $\pi$. That is, we want to run tasks and each step within them through the weak agent as many times as possible unless it fails, to minimize the cost while achieving a high task success rate, becoming a problem \textbf{when-to-escalate}. Unlike existing works that adopt other monitoring agents~\citep{wang2024mobile,gao2026agentcollab} incurring additional inference costs or training-based monitors~\citep{wei2026step} requiring annotation costs and an additional training phase, we propose to use simple uncertainty estimates as a signal to determine when-to-escalate, without inducing any training cost.

\subsection{Reference-Based Latent Uncertainty Estimation}
\label{sec:uncertainty}

Motivated by well-established observations revealing that LLMs internalize the likelihood of task success or failure in their representations~\citep{kadavath2022language,azaria2023internal,kossen2024semantic,orgad2025llms} even before generating any tokens~\citep{ji2024llm,cencerrado2025no,afzal-etal-2025-knowing}, we propose a training-free, representation-distance-based uncertainty estimation technique for agent systems.

\paragraph{Representations and reference banks.}  
For each task from a small held-out set, we run a weak agent to collect some preliminary inference trajectories without escalating anything to a strong agent. During this inference over the held-out set, we extract two final-layer representations, $h_0\in\mathbb{R}^{d}$ at the last task query prompt token, and $h_t\in\mathbb{R}^{d}$ at the last generated action token of each execution step $t$. The former captures information over the instruction and initial screen before decoding, whereas the latter also incorporates the action executed by the agent. 
These hidden states build representation banks pairing each hidden state with progress outcome:
The query bank $\mathcal{B}_q=\{(h_{i,0},y_i)\}$ adopts task outcomes $y_i\in\{\pass,\fail\}$, whereas the action bank $\mathcal{B}_a=\{(h_{i,t},y_{i,t})\}$ assigns each step its corresponding subgoal segment's outcome (assessed by a Manager agent):
\pass{} if the subgoal is successfully completed, \fail{} if it should be revised, and otherwise the final task outcome.

\paragraph{Uncertainty score.} For each channel of representation banks $c\in\{q,a\}$, i.e., query and action, we fit a shared-covariance Gaussian discriminant model~\citep{Mukhoti_2023_CVPR} on $\mathcal{B}_c$. We first standardize each representation with the bank statistics, $z_c(h)=(h-\bar{h}_c)/s_c$, where $\bar{h}_c,s_c\in\mathbb{R}^d$ are the feature-wise mean and standard deviation of all hidden states in $\mathcal{B}_c$ (pooled over both outcomes) and the division is element-wise. Let $\mu_{c,y}$ be the centroid of the standardized representations in $\mathcal{B}_c$ with outcome $y$, and $\Sigma_{c,\lambda}=\widehat{\Sigma}_c+\lambda_c I$ the pooled within-class covariance regularized by $\lambda_c>0$. Using the squared Mahalanobis distance,
\begin{equation}
d_{c,y}^{2}(h) = \bigl(z_c(h)-\mu_{c,y}\bigr)^{\top} \Sigma_{c,\lambda}^{-1} \bigl(z_c(h)-\mu_{c,y}\bigr), \label{eq:reference_distance}
\end{equation}
we define uncertainty score through the relative distance to successful and failed references:
\begin{equation}
u_c(h) = d_{c,\pass}^{2}(h) - d_{c,\fail}^{2}(h). \label{eq:uncertainty_score}
\end{equation}
A larger score indicates greater relative proximity to failed references. We use $u_q$ for initialization and $u_a$ for runtime action monitoring, selecting $\lambda_c$ for each model backbone and channel by task-grouped cross-validated AUROC (detailed in Appendix~\ref{app:method_impl}).

\paragraph{Interpretation.} 
The design of our score was largely motivated by distance-based out-of-distribution detection literature~\citep{lee2018simple,sun2022out,ren2023outofdistribution,muller2025mahalanobis}, but with careful customization for better fit in step-level agent uncertainty estimation. Unlike the relative Mahalanobis distance of \citet{ren2021simple}, which subtracts a class-agnostic background distance, we subtract the distance to the opposite outcome to capture a sharp delta signal of failure. Note that the Gaussian fitting only requires per-outcome means and a singular value decomposition, without gradient-based training, and can be iterated as the bank grows through multiple rounds of experience~\citep{NEURIPS2023_1b44b878,ouyang2026reasoningbank,lee2026recursive}.

\subsection{Uncertainty-Aware Weak-to-Strong Escalation}
\label{sec:escalation}

\paragraph{Escalation gates.}
With the measure of uncertainty, we define escalation gates for each uncertainty channel. Two channels address different sources of uncertainty: ambiguity in task specification and failure likelihood across stacked interactions~\citep{oh2026uncertainty}. The initial query-level routing can identify tasks beyond the weak agent's capability in advance, but cannot anticipate every difficulty encountered during runtime execution. Therefore, we combine an initial escalation gate, $g_0 = \mathbb{I}\bigl\{u_q(h_0)>\tau_{q}\bigr\}$ 
which routes when $g_0=1$, with a runtime escalation gate, $g_t = \mathbb{I}\bigl\{u_a(h_t)>\tau_{a}\bigr\}$ for $t\ge1$ that monitors each action of tasks retained by the initial gate given calibrated thresholds $\tau_{q}$ and $\tau_{a}$.
Runtime escalation occurs immediately after the first action with $g_t=1$; The strong agent then receives the instruction, current screenshot, and recent steps run by the weak agent, and keeps control until the task termination (See Appendix~\ref{app:prompts} for details).

\paragraph{Calibrating gating threshold to an escalation budget.}
We set each threshold to a quantile of uncertainty scores on the reference tasks, so that each gate escalates a target fraction of tasks used to construct the representation bank. Let $p_{q}$ be the target fraction of tasks escalated at initialization and $p_{\mathrm{esc}}$ the target overall escalation rate. Since the runtime gate only monitors the tasks that the initial gate leaves to the weak agent, it should escalate a fraction $p_{a} = (p_{\mathrm{esc}}-p_{q})/(1-p_{q})$ of them, so that $p_{q}+(1-p_{q})p_{a}=p_{\mathrm{esc}}$. The thresholds are then expressed as follows:
\begin{equation}
\tau_{q} = Q_{1-p_{q}}\bigl(\{u_{q}(h_{i,0})\}_{i}\bigr), \qquad \tau_{a} = Q_{1-p_{a}}\bigl(\{U_i : u_{q}(h_{i,0})\le\tau_{q}\}\bigr), \label{eq:thresholds}
\end{equation}
where $Q_{\alpha}$ is the empirical $\alpha$-quantile over reference tasks and $U_i=\max_{1\le t\le T_i}u_{a}(h_{i,t})$ is the maximum runtime score of the $i$-th reference trajectory of length $T_i$. Intuitively, a fraction $p_{q}$ of the reference tasks have initial scores above $\tau_{q}$, and among the tasks kept by the initial gate, a fraction $p_{a}$ have a maximum runtime score above $\tau_{a}$.
Note that we target escalation rates rather than failure probabilities because the gates only use the ranking of the scores. We compute $\tau_{a}$ from each task's maximum score because a single step above $\tau_{a}$ is enough for handoff; a quantile over individual steps would typically escalate more tasks than targeted. 
All scores in Eq.~(\ref{eq:thresholds}) are obtained out-of-fold from the same held-out reference set. After the thresholds are fixed, we refit one final estimator per uncertainty channel on all tasks in this same reference set for evaluation. Because refitting slightly changes the score scale, the target escalation rates are only approximate; we therefore report realized rates and match random baselines to them (Table~\ref{tab:main_results}, Appendix~\ref{app:method_calib}).

\subsection{Post-Handoff Oversight and Intervention}
\label{sec:shadow}

We note that the strong agent can also get stuck in repeated action loops or get lost in multi-turn interactions~\citep{laban2026llms}. However, its hidden states are inaccessible, while verbalized confidence is often highly unreliable as an uncertainty signal~\citep{han2024towards,kirchhof2026selfreflect}.

\paragraph{Shadow oversight.} To handle this challenge, we repurpose the weak agent from an actor as a \emph{Shadow observer} of strong agent for continued post-handoff oversight. We append each executed strong-agent step to the weak model's history in its own action format and query with the current observation every $k$ steps ($k=3$ by default). Given that context, the weak model generates a candidate action, allowing us to extract shadow representation $h^{\rm sh}_t$ at the same action token position as the strong black-box agent. Assuming that a higher uncertainty from the weak model also indicates a greater difficulty for the strong agent on average~\citep{pan2026monitoring,kale2026reliable}, we can use that representation for the runtime action uncertainty estimation again $u_{\rm sh} = u_{a}\bigl(h^{\rm sh}_t\bigr)$, to define the Shadow oversight gate $g_{\rm sh}=\mathbb{I}\bigl\{u_{\rm sh}>\tau_{\mathrm{sh}}\bigr\}$ given a threshold $\tau_{\mathrm{sh}}$.

\paragraph{Uncertainty-triggered intervention.} When $g_{\mathrm{sh}}=1$, we discard the strong agent's pending action and trigger a reflection-based intervention, which prompts the strong model agent to reassess its own progress toward the task goal and choose a new action using higher reasoning effort. We set the threshold $\tau_{\mathrm{sh}}$ to the 90th percentile of per-step reference scores of the action representation bank. 
We allow at most six Shadow interventions per task~\citep{vasudev2026accurate} and terminate further intervention once two have altered the strong agent's proposed action (Appendix~\ref{app:method_shadow}).

\section{Experiments}
\label{sec:experiments}

\subsection{Experimental Setup} \label{sec:setup}

\paragraph{Benchmark and agents.}

We evaluate our framework on OSWorld~\citep{xie2024osworld}. For the weak agent, we use four open-source model backbones: EvoCUA-8B~\citep{xue2026evocua}, Qwen3.5-35B-A3B~\citep{qwen35blog}, Nemotron-3-Nano-Omni (30B-A3B)~\citep{deshmukh2026nemotron}, and Qwen3.8-27B~\citep{qwen38}. For the strong agent that takes over the escalated task and the Manager agent that decomposes tasks into subgoals and verifies their completion, we use Claude Opus~5~\citep{anthropic2026opus5}.
We compare our uncertainty-based W2SE and oversight framework and the baseline against always-on Claude Opus 5 and Claude Sonnet 4.5~\citep{anthropic2025sonnet45}, which use those models for every task and every step. More detailed definitions, execution settings, and model identifiers are provided in Appendix~\ref{app:experimental_details}.

\paragraph{Metrics.}
We report task success rate and mean inference cost per task.
Costs are estimated from recorded token usage at fixed reference prices, including weak agent, Manager, strong-executor, and monitoring calls when enabled. These estimates measure the token-priced inference cost of the overall workflow. 
Cost coverage and accounting details are provided in Appendix~\ref{app:experimental_details}.

\subsection{Task Success and Inference Cost}
\label{sec:main_results}

\begin{table}[t]
\centering
\caption{Task success and estimated inference cost on OSWorld.
\emph{Random} escalation routes a matched fraction of tasks to Opus 5 uniformly at random; success is averaged over 10,000 random allocations. Esc.\% denotes the fraction of tasks handed off through the initial or runtime gate and excludes post-handoff Shadow monitoring. $\Delta$ Success is the absolute improvement over the corresponding standalone agent. Cost is reported in USD per task.
}
\label{tab:main_results}

\small
\renewcommand{\arraystretch}{1.08}
\setlength{\tabcolsep}{3.5pt}

\begin{tabularx}{\linewidth}{@{}lXrrrr@{}}
\toprule
Agent
& Escalation Method
& Esc. (\%)
& Success (\%) $\uparrow$
& $\Delta$ Success
& Cost (\$) $\downarrow$ \\
\midrule

EvoCUA-8B
& -  & 0.0  & 42.37 & --     & -- \\
& Random & 45.4 & 59.98 & +17.61 & 0.3031 \\
\rowcolor{ourscolor}
\cellcolor{white}
& \textbf{\method{}}   & 45.4 & 67.18 & +24.81 & 0.3627 \\
\addlinespace

Qwen3.5-35B-A3B
& -  & 0.0  & 37.40 & --     & -- \\
& Random & 64.4 & 61.72 & +24.32 & 0.3768 \\
\rowcolor{ourscolor}
\cellcolor{white}
& \textbf{\method{}}   & 64.1 & \textbf{73.66} & \textbf{+36.26} & 0.4117 \\
\addlinespace

Nemotron-3
& -  & 0.0  & 45.04 & --     & 0.0132 \\
(30B-A3B)
& Random & 43.3 & 55.41 & +10.37 & 0.2782 \\
\rowcolor{ourscolor}
\cellcolor{white}
& \textbf{\method{}}   & 43.1 & 66.03 & +20.99 & 0.3500 \\
\addlinespace

Qwen3.8-27B
& -  & 0.0  & 51.91 & --     & -- \\
& Random & 51.3 & 63.30 & +11.39 & 0.3249 \\
\rowcolor{ourscolor}
\cellcolor{white}
& \textbf{\method{}}   & 51.1 & 69.85 & +17.94 & 0.3622 \\
\midrule

\rowcolor{opuscolor}
\cellcolor{white}
Claude Opus 5
& Always on & 100.0 & 72.90 & -- & 0.5094 \\

Claude Sonnet 4.5
& Always on & 100.0 & 59.16 & -- & 0.4038 \\
\bottomrule
\end{tabularx}
\end{table}

\begin{table}[t]
\centering
\caption{Comparison with existing escalation and uncertainty estimation methods, StepWise~\citep{wei2026step} and Progress Advantage~\citep{oh2026neglected}, with EvoCUA-8B on OSWorld.}
\label{tab:baseline_comparison}

\small
\renewcommand{\arraystretch}{1.08}
\setlength{\tabcolsep}{5pt}
\vspace{-0.5em}
\begin{tabularx}{\linewidth}{@{}lXrrr@{}}
\toprule
Backbone
& Method
& Esc. (\%)
& Success (\%) $\uparrow$
& Cost (\$) $\downarrow$ \\
\midrule

EvoCUA-8B
& -
& 0.0
& 42.37
& -- \\

& StepWise~\citep{wei2026step}
& 49.6
& 58.40
& 0.2824 \\

& Progress Advantage~\citep{oh2026neglected}
& 46.9
& 62.98
& 0.3533 \\

\rowcolor{ourscolor}
\cellcolor{white}
& \textbf{\method{}}
& 45.4
& \textbf{67.18}
& 0.3627 \\

\bottomrule
\end{tabularx}
\end{table}

\method{} consistently improves task success over standalone weak-agent execution across all four backbones (Table~\ref{tab:main_results}). At closely matched escalation rates, it also outperforms random task escalation by 6.55--11.94 percentage points, showing that the training-free uncertainty signal provides useful information for selective weak-to-strong escalation rather than merely increasing access to the strong executor. With Qwen3.5-35B-A3B~\citep{qwen35blog} as the backbone, \method{} achieves 73.66\% task success while escalating 64.1\% of tasks, compared with 72.90\% for always-on Opus~\citep{anthropic2026opus5}. Besides, its estimated inference cost is
\$0.4117 per task, 19.2\% lower than the \$0.5094 cost of always-on Opus. The benefit also persists at lower escalation rates:
\method{} improves Nemotron-3~\citep{deshmukh2026nemotron} from 45.04\% to 66.03\% success while escalating 43.1\% of tasks, and EvoCUA-8B~\citep{xue2026evocua} from 42.37\% to 67.18\% while escalating 45.4\%. This suggests that \method{} can substantially improve overall performance while limiting reliance on the strong executor, yielding a favorable accuracy--cost trade-off across diverse backbones.
The gains are also consistent across weak agents with substantially different standalone capabilities. 
Moreover, all four \method{} configurations outperform the always-on Sonnet, and three of them do so at lower estimated inference cost. That is, for most backbones, selective escalation to a frontier model is both more accurate and cheaper than running a mid-tier model on every task.

We also compare against StepWise~\citep{wei2026step} and Progress Advantage~\citep{oh2026neglected} in Table~\ref{tab:baseline_comparison}. We reproduce StepWise using its released monitor, and for Progress Advantage, we use EvoCUA-8B and Qwen3-VL-8B-Instruct as the final and reference policy, respectively, and use its turn-level mean aggregate advantage as negative uncertainty. Both methods use Opus 5 as the strong executor and the same cost accounting as \method{}. We see that \method{} outperforms other methods with a higher success rate at a comparable escalation rate.

\subsection{Uncertainty Estimation}
\label{sec:uncertainty_results}

\begin{table}[t]
\centering
\caption{AUROC of uncertainty estimators. For the probability-based column, we report the best-performing uncertainty statistic in each setting. Initial and runtime AUROC use task outcomes and Manager-assigned subgoal outcomes, respectively. 
}
\label{tab:uncertainty_quality}
\small
\setlength{\tabcolsep}{4pt}
\renewcommand{\arraystretch}{1.08}
\begin{tabular*}{\linewidth}{@{\extracolsep{\fill}}llrrrr@{}}
\toprule
Stage & Backbone & Probability-based & Euclidean & kNN & \textbf{Pooled LDA (Ours)} \\
\midrule
Initial & Nemotron-3 (30B-A3B) & 0.575 & 0.692 & 0.672 & \textbf{0.749} \\
 & Qwen3.8-27B & \textbf{0.671} & 0.541 & 0.572 & 0.600 \\
\addlinespace
Runtime & Nemotron-3 (30B-A3B) & 0.488 & 0.583 & 0.489 & \textbf{0.609} \\
 & Qwen3.8-27B & 0.540 & 0.564 & 0.522 & \textbf{0.575} \\
\bottomrule
\end{tabular*}
\end{table}

We compare output probability-based uncertainty estimates with three representation-based alternatives: Euclidean distance to class centroids, kNN over reference representations, and our pooled LDA using covariance-aware Mahalanobis distance.
The probability-based baseline considers several uncertainty metrics, including negative log-probability, chosen-token probability, predictive entropy, and nucleus-set size under different aggregation schemes~\citep{ICLR2024_11f5520d}, and reports the strongest one in each setting.
As shown in Table~\ref{tab:uncertainty_quality}, the effectiveness of each uncertainty estimator (i.e., how well it predicts task or sub-goal failure) varies across both backbones and execution stages.
At the initial query-based uncertainty estimation stage, pooled LDA performs best for Nemotron, whereas probability-based baseline is stronger for Qwen3.8-27B. At the runtime step-wise estimation stage, our pooled LDA score achieves the highest AUROC for both backbones, although the overall effectiveness remains moderate. 
Overall, these results suggest that representation-based uncertainty can provide a training-free signal for runtime monitoring, particularly highlighting the effectiveness of our LDA-based estimation, while its effectiveness remains dependent on the backbone and execution stage. Meanwhile, the uncertainty estimator is modular in \method{}, allowing alternative, better uncertainty signals to be substituted without changing the escalation workflow depending on available resources or data~\citep{park2026vauq,park2026hide}.

\subsection{Ablation Study}
\label{sec:ablation}

\begin{table}[t]
\centering
\caption{Component ablation of \method{} on Qwen3.5-35B-A3B.
Checkmarks indicate enabled components. Shadow includes
post-handoff oversight and checkpoint intervention.}
\label{tab:component_ablation}
\small
\renewcommand{\arraystretch}{1.15}
\setlength{\tabcolsep}{5pt}
\begin{tabular*}{\linewidth}{@{\extracolsep{\fill}}lcccrr@{}}
\toprule
Configuration
& Manager
& Escalation
& Shadow
& Success (\%) $\uparrow$
& Cost (\$) $\downarrow$ \\
\midrule
Weak agent alone
& $\times$ & $\times$ & $\times$
& 37.40 & -- \\

+ Manager 
& $\checkmark$ & $\times$ & $\times$
& 41.15 & 0.1372 \\

+ Escalation
& $\checkmark$ & $\checkmark$ & $\times$
& 67.94 & 0.4050 \\

\midrule
\textbf{\method{}}
& $\checkmark$ & $\checkmark$ & $\checkmark$
& \textbf{73.66} & 0.4117 \\
\bottomrule
\end{tabular*}
\end{table}

\paragraph{Component-wise ablation.}
As shown in Table~\ref{tab:component_ablation}, the largest gain comes from enabling uncertainty-guided escalation: task success increases from 41.15\% with the Manager to 67.94\%, at a higher inference cost. Adding Shadow monitoring and checkpoint intervention further improves success to 73.66\% with only a small additional cost, revealing an isolated effectiveness of the post-handoff intervention. Additional backbone ablations are provided in Appendix~\ref{app:component_ablations}.

\paragraph{What does Shadow oversight detect?}
We further examine the tasks on which post-handoff monitoring fires in Table~\ref{tab:shadow_paired}.
Shadow observer triggers concentrate on particularly difficult trajectories: always-on Opus succeeds on only 51.6\% of triggered tasks, compared with 74.5\% of escalated tasks without a trigger. On the triggered subset, the Shadow-enabled policy achieves 64.5\% success, compared with 48.4\% without Shadow. These suggest that the Shadow observer identifies cases where difficulty persists even after escalation, as we expected. 
\begin{table}[t]
\centering
\caption{Subset analysis of post-handoff Shadow monitoring with Qwen3.5-35B-A3B. \emph{Both escalated} contains tasks handed off by both Shadow and No-Shadow policies. \emph{Shadow triggered} contains tasks with at least one checkpoint trigger, while \emph{Escalated, not triggered} contains escalated tasks without a trigger. \emph{Opus only} reports always-on Opus accuracy on the same task IDs.}
\label{tab:shadow_paired}
\small
\renewcommand{\arraystretch}{1.10}
\setlength{\tabcolsep}{4.5pt}

\begin{tabular*}{\linewidth}{@{\extracolsep{\fill}}lrrrr@{}}
\toprule
Subset
& Tasks
& Shadow (\%)
& No Shadow (\%)
& Opus only (\%) \\
\midrule
Both escalated
& 131 & 69.5 & 67.9 & 64.9 \\

Shadow triggered
& 31 & \textbf{64.5} & 48.4 & 51.6 \\

Escalated, not triggered
& 137 & 74.5 & 69.3 & 74.5 \\
\bottomrule
\end{tabular*}
\end{table}

\subsection{Sensitivity Analysis}
\label{sec:sensitivity}

\paragraph{Which tasks to escalate.}
We first examine whether uncertainty helps identify which tasks should receive strong-model assistance. Figure~\ref{fig:escalation_rate} varies the escalation budget and compares \method{} with random task-level routing at matched escalation rates. 
The random baseline is evaluated at the same operating points, shown by the hollow markers; its near-linear trend follows from uniformly allocating a fixed fraction of tasks to the strong agent. Across varying escalation rates, \method{} consistently achieves higher task success, indicating that its gains come from selectively allocating strong-model computation to more informative tasks rather than simply increasing strong-model usage.

\begin{figure}[t]
    \centering
    \begin{minipage}[t]{0.48\linewidth}
        \centering
        \includegraphics[width=\linewidth]{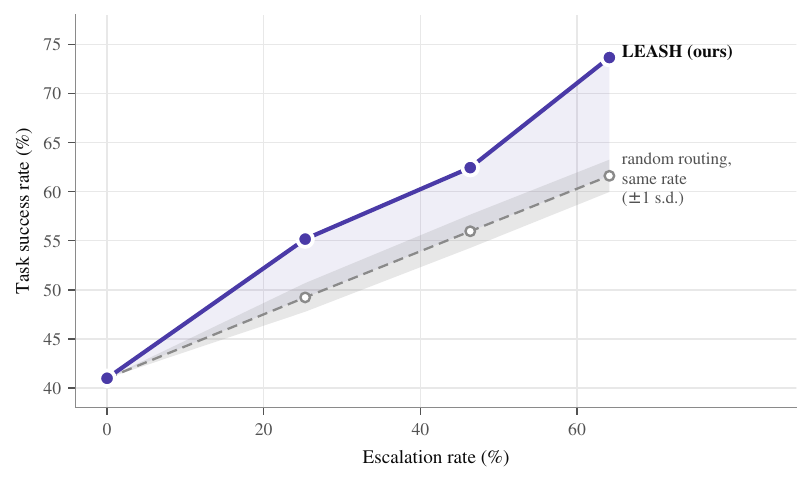}
        \caption{Sensitivity to escalation rate on Qwen3.5-35B-A3B. Hollow markers denote the random-routing operating points, each estimated over 4,000 random task allocations.}
        \label{fig:escalation_rate}
    \end{minipage}
    \hfill
    \begin{minipage}[t]{0.48\linewidth}
        \centering
        \includegraphics[width=\linewidth]{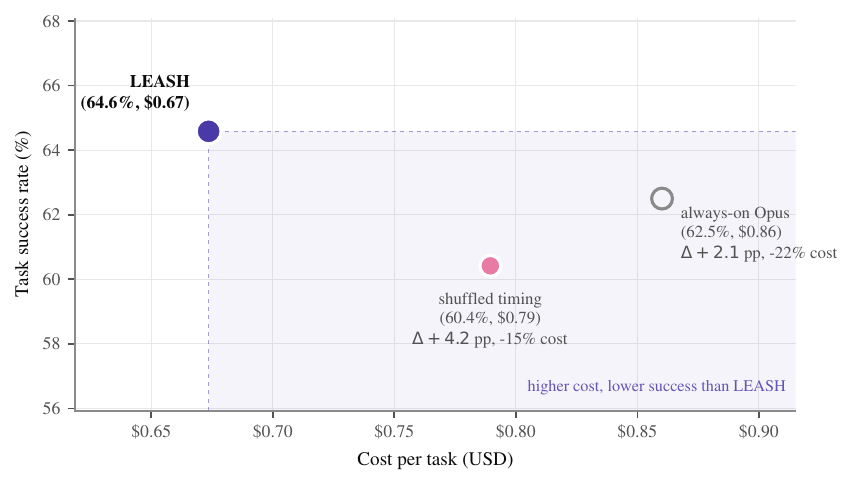}
        \caption{Effect of escalation timing on escalation-prone tasks.
\method{} achieves higher success at lower estimated cost than shuffled runtime escalation and always-on Opus.}
        \label{fig:timing}
    \end{minipage}
\end{figure}
\paragraph{When to escalate.}
We next examine whether the timing of handoff matters once a task is likely to require escalation. We focus on an escalation-prone subset consisting of the tasks most frequently escalated by \method{} and compare \method{} with a shuffled-timing baseline. As shown in Figure~\ref{fig:timing}, \method{} achieves 64.6\% task success at \$0.67 per task, compared with 60.4\% at \$0.79 under shuffled timing. This 4.2-point improvement, together with 15\% lower cost, suggests that effective weak-to-strong escalation depends not only on selecting the right tasks, but also on transferring control at the right point during execution.

\paragraph{Reference bank size.}
We examine the effect of reference bank size on initial-stage uncertainty estimation. As shown in Table~\ref{tab:bank_sensitivity}, increasing the bank size from 40 to 99 improves AUROC from 0.701 to 0.770 for Qwen3.5-35B-A3B and from 0.633 to 0.696 for EvoCUA-8B, highlighting its compatibility for recursively self-improving system~\citep{yin-etal-2025-godel,ouyang2026reasoningbank,lee2026recursive}.

\begin{table}[t]
\centering
\caption{Sensitivity to reference bank size. Initial-stage AUROC is reported as mean $\pm$ standard deviation across subsamples.}
\label{tab:bank_sensitivity}
\small
\setlength{\tabcolsep}{10pt}
\begin{tabular}{lcc}
\toprule
Bank size & Qwen3.5-35B-A3B & EvoCUA-8B \\
\midrule
40 & $0.701\pm0.100$ & $0.633\pm0.099$ \\
60 & $0.744\pm0.060$ & $0.654\pm0.061$ \\
80 & $0.749\pm0.036$ & $0.681\pm0.049$ \\
99 & $0.770\pm0.021$ & $0.696\pm0.022$ \\
\bottomrule
\end{tabular}
\end{table}

\section{Theoretical Analysis}
\label{sec:initial_theory}

Section~\ref{sec:method} introduced the distance-based uncertainty score in Eq.~(\ref{eq:uncertainty_score}), and Section~\ref{sec:experiments} showed that escalating on that score outperforms random escalation at matched rates. In this section, we now explain the empirical success of our uncertainty-based escalation through a simple probabilistic analysis. 
To keep the analysis simple, we consider a single decision per task: each task is either handed to the strong agent before execution begins ($t_{\mathrm{esc}}=0$) or never handed off ($t_{\mathrm{esc}}=T$). This isolates the initial gate $g_0$ of \method{} (Section~\ref{sec:escalation}), with no runtime escalation afterward. We also assume the budget in Eq.~(\ref{eq:objective}) is used exactly, so a fixed fraction $p$ of tasks is escalated, which is what our quantile thresholds target.
Let $z$ be the standardized query representation, $Y\in\{\pass,\fail\}$ the outcome if the weak agent completes the task without escalation, and $q_{\mathrm{w}}(z)=\Pr(Y=\pass\mid z)$ and $q_{\mathrm{s}}(z)$ the success probabilities when the weak agent and the strong agent, respectively, completes the task. We analyze $u(z)$, the population version of our initial score $u_q$ in Eq.~(\ref{eq:uncertainty_score}), computed with the true class means and covariance instead of their estimates.

\begin{proposition}[Optimality of \method{}'s initial gate at a fixed escalation rate]\label{prop:lr}
Suppose $z\mid Y=y\sim\mathcal{N}(\mu_y,\Sigma)$ with shared covariance $\Sigma\succ0$ and distinct means, and both outcomes occur with positive probability. Let the initial gate escalate a task when $u(z)$ exceeds the threshold that escalates a fraction $p$ of tasks. Among all escalation rules based on $z$ with the same rate $p$:
\begin{enumerate}[label=(\roman*),leftmargin=*,nosep]
\item the gate hands off the largest fraction of the tasks the weak agent would fail, i.e., it maximizes $\Pr(\text{escalated}\mid Y=\fail)$;
\item if the escalation gain $\Delta(z)=q_{\mathrm{s}}(z)-q_{\mathrm{w}}(z)$ is a nondecreasing function of $u(z)$, and when $q_{\mathrm{s}}$ is constant, the gate also maximizes task success rate.
\end{enumerate}
\end{proposition}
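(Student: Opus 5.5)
The plan is to reduce both parts to the Neyman--Pearson lemma by showing that the population score $u(z)$ is a strictly increasing affine function of the log-likelihood ratio $\log\bigl(f_{\fail}(z)/f_{\pass}(z)\bigr)$. Expanding the two squared Mahalanobis distances with the shared $\Sigma$, the quadratic terms $z^\top\Sigma^{-1}z$ cancel, leaving
\[
u(z)=2(\mu_{\fail}-\mu_{\pass})^\top\Sigma^{-1}z+\mu_{\pass}^\top\Sigma^{-1}\mu_{\pass}-\mu_{\fail}^\top\Sigma^{-1}\mu_{\fail}.
\]
The Gaussian log-density ratio equals exactly $\tfrac12 u(z)$, since the normalizing constants cancel because the covariance is shared. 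Hence $f_{\fail}(z)/f_{\pass}(z)=\exp(u(z)/2)$. By Bayes' rule, $\Pr(Y=\fail\mid z)=\bigl(1+\tfrac{\pi_{\pass}}{\pi_{\fail}}e^{-u(z)/2}\bigr)^{-1}$, which is strictly increasing in $u$, and equivalently $q_{\mathrm{w}}(z)$ is strictly decreasing in $u$. Because the means are distinct and $\Sigma\succ0$, $u(z)$ is a nondegenerate linear function of a Gaussian mixture. It therefore has a continuous distribution, so a threshold escalating exactly a fraction $p$ exists and ties have probability zero; no randomization is needed.

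For (i), write any rule as $\phi(z)\in[0,1]$ with $\mathbb{E}[\phi]=p$, and let $\phi^*=\mathbb{I}\{u>\tau\}$. The key quantity is $\Pr(\text{esc}\mid\fail)=\mathbb{E}[\phi\,\Pr(\fail\mid z)]/\pi_{\fail}$. The standard rearrangement argument gives
\[
\mathbb{E}\bigl[(\phi^*-\phi)\bigl(\Pr(\fail\mid z)-c\bigr)\bigr]\ge0,
\qquad c=\Pr(\fail\mid u=\tau),
\]
because $\phi^*-\phi\ge0$ exactly where $\Pr(\fail\mid z)>c$ and $\le0$ where it is $<c$. The term $c\,\mathbb{E}[\phi^*-\phi]$ vanishes since both rules have rate $p$. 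This yields $\mathbb{E}[\phi^*\Pr(\fail\mid z)]\ge\mathbb{E}[\phi\Pr(\fail\mid z)]$. Equivalently, this is Neyman--Pearson with the rate constraint taken under the mixture rather than under $\pass$, and the same exchange argument applies.

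For (ii), success under rule $\phi$ is $\mathbb{E}[q_{\mathrm{w}}(z)]+\mathbb{E}[\phi(z)\Delta(z)]$, so it suffices to maximize $\mathbb{E}[\phi\Delta]$ subject to $\mathbb{E}[\phi]=p$. Set $\Delta(z)=g(u(z))$ with $g$ nondecreasing, and let $c=g(\tau)$. Then $\Delta\ge c$ on $\{u>\tau\}$ and $\Delta\le c$ on $\{u\le\tau\}$, and the same exchange inequality gives optimality of $\phi^*$. For the clause ``when $q_{\mathrm{s}}$ is constant'', note that $\Delta=q_{\mathrm{s}}-q_{\mathrm{w}}$ with $q_{\mathrm{w}}$ decreasing in $u$, as established in the first step. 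So $\Delta$ is automatically nondecreasing in $u$ and the monotonicity hypothesis holds without further assumption. I read the statement as offering two sufficient conditions, with the constant-$q_{\mathrm{s}}$ case as a corollary of the monotone one.

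The main obstacle is bookkeeping rather than depth. The first point is stating the exchange argument with the rate constraint measured under the marginal of $z$. The second is checking that the threshold $\tau$ achieves rate $p$ exactly, which uses continuity of $u$ and so needs distinct means. The third is handling the weak inequalities where $g$ is flat, which is fine because the exchange inequality only needs sign agreement, not strictness.
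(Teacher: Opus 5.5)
Your proposal is correct and follows essentially the same route as the paper. The paper likewise shows via Lemma~\ref{lem:lda} that $u$ is twice the failure log-odds up to a constant, gets a unique threshold from the positive density of the two-Gaussian mixture of $u$, and proves (i) and (ii) by the same exchange inequality $\mathbb{E}[(\phi^\star-\phi)(\eta-\eta_p)]\ge0$ (resp.\ with $\Delta-\Delta_p$), treating constant $q_{\mathrm{s}}$ as a special case of the monotone condition. Its only addition is to note that the inequality is strict against random escalation $\phi\equiv p$, which the optimality claims as stated do not require.
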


\paragraph{Proof sketch.} Under this model, our score equals (up to an additive constant) twice the log-odds that the weak agent fails (Lemma~\ref{lem:lda} in Appendix~\ref{app:score_theory}), which recalls the classical linear discriminant analysis identity~\citep{fisher1936use}. Escalating the highest-scoring tasks thus escalates the tasks most likely to fail, and a Neyman--Pearson exchange argument~\citep{neyman1933problem} shows that no other rule at the same rate catches more failures; (ii) follows by the same argument applied to $\Delta$.

\paragraph{Implication.} Proposition~\ref{prop:lr}~(i) says that, at any escalation budget, no escalation rule based on the query representation $z$ captures more of the weak agent's failures than the initial gate of \method{}. For example, random escalation with budget $p$ hands off only a fraction $p$ of these failures, whereas our gate hands off strictly more. Moreover, (ii) shows that our gate spends the strong-agent budget where it pays off most~\citep{NEURIPS2023_1f09e1ee}, so catching more of the weak agent's failures directly turns into higher overall task success. Meanwhile, the analysis covers only the initial gate; analyzing the runtime gate would require an optimal-stopping formulation, which we leave to future work.

\section{Conclusion}
\label{sec:conclusion}
We introduced \emph{weak to strong escalation} as a general inference paradigm for adaptive agent systems, in which inexpensive weak models handle routine execution and stronger models are invoked selectively when additional capability is needed. Under this paradigm, we proposed \method{}, a training-free, uncertainty-aware framework that supports initial routing, runtime escalation, and continued post-handoff oversight. Our results suggest that a shared latent uncertainty signal can support both selective escalation and post-handoff oversight, providing a practical mechanism for dynamically allocating model capability while retaining continued control over strong-agent execution.

\section*{Limitations}
\label{sec:limitations}

Our current evaluation is limited to computer-use agents on OSWorld, so the generality of weak-to-strong escalation beyond this domain remains to be validated. In addition, our uncertainty score estimates weak-agent failure risk rather than the expected benefit of escalation; Proposition~\ref{prop:lr} requires a condition that the escalation gain be nondecreasing in the score, which may fail when both weak and strong agents struggle on the same task. 
Finally, post-handoff Shadow monitoring operates under a distribution shift from weak-agent trajectories, and its intervention effect is not causally isolated in our current evaluation.

\newpage

\bibliography{main}

\newpage
\appendix

\section{Method and Formulation Details}
\label{app:method_details}

\subsection{Routing Details, Analyses and Proofs}
\label{app:score_theory}

\paragraph{Setup and notation.} Section~\ref{sec:initial_theory} analyzes a single routing decision made before execution, i.e., Eq.~(\ref{eq:objective}) restricted to $t_{\mathrm{esc}}\in\{0,T\}$. We keep both agents fixed and compare only routing rules. Let $q_{\mathrm{w}}(z)=\Pr(Y=\pass\mid z)$ be the success probability if the weak agent completes the task without escalation, and $q_{\mathrm{s}}(z)$ the success probability if the strong agent completes it. A routing rule $\phi(z)\in[0,1]$ escalates a task with probability $\phi(z)$, using only $z$, i.e., without knowing whether the task would actually succeed. Defining $\Delta(z)=q_{\mathrm{s}}(z)-q_{\mathrm{w}}(z)$ for the success gain from escalation, the overall success rate decomposes as
\begin{equation}
J(\phi)=\mathbb{E}\bigl[(1-\phi(z))q_{\mathrm{w}}(z)+\phi(z)q_{\mathrm{s}}(z)\bigr]
=\mathbb{E}[q_{\mathrm{w}}(z)]+\mathbb{E}[\phi(z)\Delta(z)].
\label{eq:routing_gain}
\end{equation}
For the proofs, we will use the threshold-exchange argument underlying the Neyman--Pearson lemma~\citep{neyman1933problem}, with a constraint on the marginal escalation rate rather than a class-conditional false-positive rate.

\paragraph{Relative distance and the fitted score.} For either channel, expanding Eq.~(\ref{eq:uncertainty_score}) gives
\begin{equation}
\begin{aligned}
u_c(h) ={}& 2\,(\mu_{c,\fail}-\mu_{c,\pass})^{\top}\Sigma_{c,\lambda}^{-1}z_c(h) \\
&+\mu_{c,\pass}^{\top}\Sigma_{c,\lambda}^{-1}\mu_{c,\pass}
-\mu_{c,\fail}^{\top}\Sigma_{c,\lambda}^{-1}\mu_{c,\fail}.
\end{aligned}
\label{eq:linear_form}
\end{equation}
Since the shared quadratic term cancels, $u_c$ measures displacement along $\Sigma_{c,\lambda}^{-1}(\mu_{c,\fail}-\mu_{c,\pass})$. Note that this identity holds for the fitted score without any distributional assumption, and the Gaussian model is needed only for the population likelihood-ratio interpretation below.

\begin{lemma}[Distance score as failure log-odds]\label{lem:lda}
Under the Gaussian model of Proposition~\ref{prop:lr}, let $\pi_y=\Pr(Y=y)>0$ be the class priors and $d_y^2(z)=(z-\mu_y)^\top\Sigma^{-1}(z-\mu_y)$, so that $u(z)=d_{\pass}^2(z)-d_{\fail}^2(z)$. Then the weak agent's failure risk $\eta(z)=\Pr(Y=\fail\mid z)$ satisfies
\begin{equation}
\log\frac{\eta(z)}{1-\eta(z)}
=\tfrac12u(z)+\log\frac{\pi_{\fail}}{\pi_{\pass}},
\label{eq:failure_log_odds}
\end{equation}
so $\eta$ is strictly increasing in $u$.
\end{lemma}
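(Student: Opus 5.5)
The plan is to apply Bayes' rule to the posterior odds and then use the Gaussian densities. With $p_y$ the class-conditional density $\mathcal{N}(\mu_y,\Sigma)$, the posterior odds factor as
\[
\frac{\eta(z)}{1-\eta(z)}=\frac{\pi_{\fail}\,p_{\fail}(z)}{\pi_{\pass}\,p_{\pass}(z)} .
\]
This step needs both outcomes to have positive probability, so that $\pi_{\pass},\pi_{\fail}>0$. The densities are strictly positive everywhere, so $\eta(z)\in(0,1)$ and the logarithms are well defined. Taking logs, the prior term $\log(\pi_{\fail}/\pi_{\pass})$ separates out immediately.

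The remaining work is the log density ratio. Because the covariance $\Sigma\succ0$ is shared, the normalizing constants $(2\pi)^{-d/2}\det(\Sigma)^{-1/2}$ are identical for the two classes and cancel. This leaves
\[
\log\frac{p_{\fail}(z)}{p_{\pass}(z)}=-\tfrac12 d_{\fail}^2(z)+\tfrac12 d_{\pass}^2(z)=\tfrac12 u(z),
\]
which by the definition $u=d_{\pass}^2-d_{\fail}^2$ gives Eq.~(\ref{eq:failure_log_odds}). No expansion of the quadratic forms is needed. I would only remark, citing Eq.~(\ref{eq:linear_form}) in its population form, that $u$ is affine in $z$; this recovers the classical LDA identity.

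For monotonicity, I will rewrite the identity as $\eta(z)=\sigma\bigl(\tfrac12u(z)+\log(\pi_{\fail}/\pi_{\pass})\bigr)$ with $\sigma(s)=1/(1+e^{-s})$. Since $\sigma$ is strictly increasing, $\eta$ is a strictly increasing function of $u$. In particular, $\eta(z)$ depends on $z$ only through $u(z)$.

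The only point needing care is the interpretation of "strictly increasing in $u$." It means $\eta=g\circ u$ for a strictly increasing $g$, which requires checking that $\eta$ is determined by $u$ alone; the identity gives exactly this. The distinct-means assumption ensures $u$ is nonconstant, since $\Sigma^{-1}(\mu_{\fail}-\mu_{\pass})\neq0$, so the statement is not vacuous. Beyond that, the argument is a routine calculation with no real obstacle.
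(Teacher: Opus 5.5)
Your proposal is correct and follows essentially the same route as the paper. Both proofs cancel the shared-covariance normalizers to get $\log(f_{\fail}/f_{\pass})=\tfrac12u(z)$, add the prior log-ratio via Bayes' rule, and read off strict monotonicity from the strictly increasing log-odds map (your sigmoid inversion is the same observation); your extra remarks that $\eta\in(0,1)$ and that distinct means make $u$ nonconstant are harmless details the paper leaves implicit.
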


\begin{proof}[Proof of Lemma~\ref{lem:lda}]
Let $f_y$ denote the class-conditional Gaussian density. The shared covariance cancels the density normalizer terms, resulting in
\begin{equation*}
\log\frac{f_{\fail}(z)}{f_{\pass}(z)}
=-\tfrac12d_{\fail}^2(z)+\tfrac12d_{\pass}^2(z)
=\tfrac12u(z).
\end{equation*}
By Bayes' rule, $\log\{\eta/(1-\eta)\}=\log(f_{\fail}/f_{\pass})+\log(\pi_{\fail}/\pi_{\pass})$, which gives Eq.~(\ref{eq:failure_log_odds}). The log-odds are strictly increasing in $\eta$, so $\eta$ is a strictly increasing function of $u$.
\end{proof}

\Needspace{10\baselineskip}
\begin{proof}[Proof of Proposition~\ref{prop:lr}]
Denote $\delta=\mu_{\fail}-\mu_{\pass}\ne0$. By the expansion in Eq.~(\ref{eq:linear_form}), $u(z)=2\delta^\top\Sigma^{-1}z+b$ for a constant $b$, so within each class $u$ is Gaussian with variance $4\delta^\top\Sigma^{-1}\delta>0$. The distribution of $u$ is therefore a mixture of two Gaussians with positive variance, which has a positive density on $\mathbb{R}$, so a unique $\tau_p$ with $\Pr(u>\tau_p)=p$ exists. The initial gate at rate $p$ is $\phi^\star(z)=\mathbb{I}\{u(z)>\tau_p\}$.

\emph{(i)} Let $\eta_p$ be the failure risk at $u=\tau_p$, and let $\phi$ be any rule with $\mathbb{E}[\phi]=p$. Since $\phi$ uses only $z$, a task with representation $z$ is escalated with probability $\phi(z)$ regardless of whether it would fail, so $\Pr(\text{escalated},Y=\fail)=\mathbb{E}[\phi\eta]$. Because both rules escalate the same fraction of tasks, $\mathbb{E}[\phi^\star-\phi]=0$, and we may subtract the constant $\eta_p$:
\begin{equation*}
\mathbb{E}[\phi^\star\eta]-\mathbb{E}[\phi\eta]
=\mathbb{E}\bigl[(\phi^\star-\phi)(\eta-\eta_p)\bigr]\ge0.
\end{equation*}
By Lemma~\ref{lem:lda}, $\eta\ge\eta_p$ wherever $\phi^\star=1$, and $\eta\le\eta_p$ wherever $\phi^\star=0$. 
For random escalation, $\phi\equiv p$, the integrand $(\phi^\star-p)(\eta-\eta_p)$ is strictly positive whenever $u\ne\tau_p$, because $\eta$ is strictly increasing in $u$. Therefore, the inequality is strict, and while random escalation hands off exactly a fraction $p$ of the failures, $\phi^\star$ hands off strictly more.
Dividing by $\pi_{\fail}$, $\phi^\star$ also maximizes $\Pr(\text{escalated}\mid Y=\fail)$.

\emph{(ii)} Since $Y$ is the weak agent's outcome, $q_{\mathrm{w}}(z)=1-\eta(z)$, so $\Delta(z)=q_{\mathrm{s}}(z)-1+\eta(z)$. Suppose $\Delta(z)=g(u(z))$ for a nondecreasing $g$, and let $\Delta_p=g(\tau_p)$. By Eq.~(\ref{eq:routing_gain}) and the same subtraction,
\begin{equation*}
J(\phi^\star)-J(\phi)
=\mathbb{E}\bigl[(\phi^\star-\phi)\Delta\bigr]
=\mathbb{E}\bigl[(\phi^\star-\phi)(\Delta-\Delta_p)\bigr]\ge0,
\end{equation*}
since $\Delta\ge\Delta_p$ wherever $\phi^\star=1$ and $\Delta\le\Delta_p$ wherever $\phi^\star=0$. If $q_{\mathrm{s}}$ is constant, $g$ is strictly increasing, so the condition holds.
\end{proof}

\subsection{Representations and Estimator Fitting}
\label{app:method_impl}

\paragraph{Layer and token.} We use the last transformer layer of each weak backbone (Table~\ref{tab:layers}). For the runtime channel, $h_t$ is the hidden state at the last generated token of the action; we apply no pooling within a step and no temporal window across steps, so every step contributes one representation. For the initial channel, we reissue the task's first request with a one-token generation limit and read the hidden state at the final prompt position, obtained during prefill.

\begin{table}[h]
\centering
\caption{Hidden layer used for each weak backbone (0-indexed layer / number of layers).}
\label{tab:layers}
\small
\begin{tabular}{lc}
\toprule
Backbone & Layer \\
\midrule
EvoCUA-8B & 35 / 36 \\
Qwen3.5-35B-A3B & 39 / 40 \\
Nemotron-3-Nano-Omni (30B-A3B) & 51 / 52 \\
Qwen3.8-27B & 63 / 64 \\
\bottomrule
\end{tabular}
\end{table}

\paragraph{Standardization and covariance.} Standardization uses the feature-wise mean and standard deviation of the channel's bank, pooled across both classes. The class centroids are computed in the standardized space, and $\widehat{\Sigma}_c$ is the pooled within-class scatter normalized by $n-2$, where $n$ is the number of representations in the bank. 

\paragraph{Selecting the ridge parameter.} We select $\lambda_c$ from a grid spanning $10^{-2}$ to $10^{3}$ by task-grouped 10-fold cross-validated AUROC, separately for each backbone and channel. Grouping by task is essential: steps from one trajectory share much of their context and, at runtime, their segment label. In a preliminary check, a split that ignored task membership produced a near-perfect AUROC of 0.997, which reflected leakage across steps of the same trajectory rather than genuine discrimination.

\subsection{Reference Banks and Threshold Calibration}
\label{app:method_calib}

\paragraph{Reference banks.}
For each backbone, we construct a small held-out set of Manager-guided trajectories collected with escalation disabled and disjoint from the evaluation tasks. From this set, we construct the query and action representation banks: initial representations are labeled by task-level outcomes, while runtime representations inherit the outcomes of their corresponding subgoal segments.

\paragraph{Held-out scoring.}
We use the same held-out reference set for both representation-bank construction and threshold calibration. Specifically, we perform task-grouped 10-fold cross-validation, keeping all steps from the same task in one fold. For each fold, an uncertainty estimator is fitted on the other nine folds and used only to score the held-out fold, yielding one out-of-fold score for every reference
task. The fold-specific estimators are used only for calibration and are discarded afterwards.

We then set the escalation thresholds as empirical quantiles of these out-of-fold scores. After calibration, we fit one final estimator per uncertainty channel on all tasks in the same reference set, keep the calibrated thresholds fixed, and use these final estimators to score the evaluation tasks. No evaluation data are used for either estimator fitting or threshold calibration.

\paragraph{Targets and thresholds.} All four backbones use the same targets, $p_{q}=0.25$ and $p_{\mathrm{esc}}=0.50$, so that $p_{a}=(p_{\mathrm{esc}}-p_{q})/(1-p_{q})=1/3$. The thresholds $\tau_{q}$ and $\tau_{a}$ are exact order statistics of the held-out scores in Eq.~(\ref{eq:thresholds}). Because the reference set is finite, the achievable rates are discrete, with a resolution of one reference task.

\subsection{Post-Handoff Oversight}
\label{app:method_shadow}

\paragraph{Probe schedule and context.} Shadow probes run at the first strong-agent step after handoff and at every third step thereafter. At every strong-agent step, including steps without a probe, the executed action is appended to the weak model's history in that model's own action format, so that each probe conditions on the actual trajectory rather than on the pre-handoff context. A probe calls the weak agent's own action-generation routine, with its usual system prompt, tool schema, and image history, and does not invoke the Manager. The candidate action is logged for analysis but never executed or shown to the strong agent.

\paragraph{Shadow threshold.} $\tau_{\mathrm{sh}}$ is the 90th percentile of the pooled per-step runtime scores over the 99-task reference pool, computed separately for each backbone. It is derived directly from the reference pool rather than from the escalation thresholds. For EvoCUA-8B, the in-sample value ($+1.7309$) is close to the corresponding out-of-fold value ($+1.7106$).

\paragraph{Checkpoint budget.} At most six checkpoints can fire in an episode. In addition, once two checkpoints have changed the strong agent's action, further firings are suppressed; checkpoints that leave the action unchanged do not count toward this second limit. Because probes occur every three steps, consecutive checkpoints are at least three steps apart. 

\paragraph{Checkpoint mechanics.} When a checkpoint fires, the strong agent's conversation is truncated to its length at the start of the current step, which removes the pending action; the neutral checkpoint message in Figure~\ref{fig:handoff_checkpoint_prompts} is appended; and the strong agent re-decides the step with a raised reasoning-effort setting. The weak model's candidate action is deliberately excluded from the checkpoint message, so the strong agent is never steered toward a specific action proposed by the weaker model.

\section{Additional Experimental Results}
\label{app:additional_experiments}

\subsection{Experimental Details}
\label{app:experimental_details}

\paragraph{Evaluation setting.}
We evaluate on OSWorld with screenshot observations and a shared budget of 50 actions per task. The lightweight backbones are EvoCUA-8B~\citep{xue2026evocua}, Qwen3.5-35B-A3B~\citep{qwen35blog}, Nemotron-3-Nano-Omni (30B-A3B)~\citep{deshmukh2026nemotron}, and Qwen3.8-27B~\citep{qwen38}. Claude Opus 5~\citep{anthropic2026opus5} serves as both the Manager and strong executor, and Claude Sonnet 4.5~\citep{anthropic2025sonnet45} is an additional always-on reference. The always-on references run the strong model alone, without the Manager, so they measure strong-model capability without the Manager or escalation framework.

\paragraph{Configurations and evaluation sets.}
\emph{Manager} adds subgoal planning and verification; \emph{Escalation} further enables initial and runtime handoff; and \method{} additionally enables post-handoff Shadow monitoring and corresponding intervention. These configurations are cumulative. Shared results below follow Tables~\ref{tab:main_results} and~\ref{tab:component_ablation}. The evaluation task pool contains 262 tasks. 

\paragraph{Reference banks and calibration.}
We use a held-out reference set that is disjoint from the 262 evaluation tasks. We obtain task-grouped 10-fold out-of-fold scores, keeping all observations from a task in one fold; preprocessing and estimator fitting use only the training folds. Initial scores use final task outcomes and runtime scores use Manager-assigned subgoal outcomes. Thresholds are order statistics of the held-out scores targeting a 25\% initial and 50\% overall escalation frequency, with runtime calibration based on per-task maximum scores. Refitting slightly changes the score scale, and differences between the reference and evaluation sets can further affect the realized escalation rate; hence the observed rates (43--64\%) need not exactly match the calibration targets.

\paragraph{Post-handoff execution.}
After handoff, Shadow probes query the lightweight model every three strong-agent steps. Probe actions are not executed; their hidden representations provide the monitoring scores. Higher scores indicate greater uncertainty and can trigger an intervention that asks the strong agent to reassess its current strategy while retaining control.

\paragraph{Cost estimation.}
We estimate inference cost from recorded token usage using OpenRouter reference prices, accounting for cached tokens where applicable. Total cost includes lightweight-agent, Manager, strong-executor, and monitoring calls. For locally served models, we report reference inference-cost estimates rather than measured hardware costs.

\subsection{Uncertainty Estimation and Reference-Bank Sensitivity}
\label{app:uncertainty_details}

\paragraph{Estimator comparison.}
We compare pooled LDA with Euclidean distance, kNN, and probability-based baselines. The probability-based baseline considers negative log-probability, chosen-token probability, predictive entropy, and nucleus-set size under several aggregation schemes; we report the strongest statistic in each setting. Performance varies across backbones and execution stages, with pooled LDA providing the strongest overall discrimination in most settings. Runtime AUROC reflects discrimination of Manager-assigned subgoal-segment outcomes rather than individual action correctness.

\paragraph{Reference-bank size.}
Table~\ref{tab:bank_sensitivity} reports initial-stage AUROC as the bank is subsampled, it also shows that estimates become more stable as the bank grows: the standard deviation decreases from roughly 0.10 at 40 tasks to about 0.02 at 99 tasks. Because thresholds are quantiles of this same pool, a small bank also gives coarse control of the realized escalation rate.

\subsection{Additional Component Ablations}
\label{app:component_ablations}

\paragraph{Initial and runtime escalation.}
Table~\ref{tab:app_gate_ablation} compares different gate configurations for Qwen3.5-35B-A3B. Runtime-only escalation achieves higher success than initial-only escalation, at higher cost, indicating that runtime monitoring provides the stronger escalation signal. The complete framework attains the best overall performance, but it also enables Shadow oversight. Its counterpart without Shadow is the \emph{Escalation} configuration in Table~\ref{tab:component_ablation} (67.94\% at \$0.4050), which falls below runtime-only escalation, so this ablation does not show a benefit from adding the initial gate to the runtime gate in this setting.

\begin{table}[t]
\centering
\caption{Gate configurations for Qwen3.5-35B-A3B. Each configuration retains the Manager; the first two disable post-handoff Shadow monitoring and checkpoint intervention.}
\label{tab:app_gate_ablation}
\small
\renewcommand{\arraystretch}{1.10}
\setlength{\tabcolsep}{6pt}
\begin{tabular*}{\linewidth}{@{\extracolsep{\fill}}lrr@{}}
\toprule
Gate configuration & Success (\%) $\uparrow$ & Cost (\$) $\downarrow$ \\
\midrule
Initial only & 59.92 & 0.3122 \\
Runtime only & 71.65 & 0.4169 \\
\method{} & \textbf{73.66} & 0.4117 \\
\bottomrule
\end{tabular*}
\end{table}

\subsection{Inference Cost}
\label{app:cost_breakdown}

Runtime uncertainty scoring reuses hidden states from the weak agent's action generation and therefore requires no additional model call. Initial and Shadow probes require lightweight-model inference, while interventions may incur additional strong-model computation. All such costs are already included in the total workflow cost reported in Table~\ref{tab:main_results}.


\section{Prompt Templates}
\label{app:prompts}

Figures~\ref{fig:manager_prompts} and~\ref{fig:handoff_checkpoint_prompts} show the core prompts used for planning, verification, handoff, and checkpoint intervention. The planning prompt is excerpted, with omissions marked by \texttt{[...]}; braces denote runtime placeholders in the handoff template. Escalation gates use statistical scores and require no prompt. Shadow candidate actions are neither executed nor shown to the strong agent.

\begin{figure*}[!t]
\centering
\begin{tcolorbox}[
  title={Manager: task decomposition (system prompt excerpt)},
  colback=gray!3,colframe=black!55,
  fonttitle=\small\bfseries,boxrule=0.5pt,arc=1mm,
  left=2mm,right=2mm,top=1mm,bottom=1mm]
\begin{Verbatim}[fontsize=\footnotesize,breaklines=true,breakanywhere=true]
You are a planning agent for a SINGLE computer-use agent operating ONE Ubuntu desktop. You are a regular LLM — you cannot click or type yourself; a downstream agent executes pyautogui actions one step at a time. Your job is to take one desktop task and decompose it into a SHORT, ORDERED list of subgoals the agent completes sequentially, each with a concrete done-criterion that a verifier can confirm from a screenshot.

[...]

## Output format
Respond with VALID JSON only (no markdown code fences, no extra prose). Schema:
{
  "task_analysis": "one sentence on the task and how you split it",
  "subgoals": [
    {"id": "subgoal_1", "description": "what to accomplish", "done": "observable on-screen condition that confirms completion"},
    {"id": "subgoal_2", "description": "...", "done": "..."}
  ]
}
Rules: ids are unique and sequential (subgoal_1, subgoal_2, ...); 1–6 subgoals is typical; the subgoals array must never be empty.
\end{Verbatim}
\end{tcolorbox}

\begin{tcolorbox}[
  title={Manager: strict verification (appended system instruction)},
  colback=gray!3,colframe=black!55,
  fonttitle=\small\bfseries,boxrule=0.5pt,arc=1mm,
  left=2mm,right=2mm,top=1mm,bottom=1mm]
\begin{Verbatim}[fontsize=\footnotesize,breaklines=true,breakanywhere=true]
## Strict verification (IMPORTANT)
When you choose `advance`, you MUST positively confirm from the CURRENT screenshot that the focus sub-goal's completion criterion is CONCRETELY and FULLY satisfied by the actual end-state — not merely that the relevant application, window, file, menu, or element is present, open, or focused. If you cannot positively verify the required end-state from what is visibly on screen, do NOT advance — choose `no_change` or `revise` instead. Do not assume an action succeeded just because it was attempted.
\end{Verbatim}
\end{tcolorbox}
\caption{Manager prompt excerpts. The strict verification instruction is appended to the replanning system prompt; it is not a standalone verifier prompt.}
\label{fig:manager_prompts}
\end{figure*}

\begin{figure*}[!t]
\centering
\begin{tcolorbox}[
  title={Strong-agent handoff (message template)},
  colback=gray!3,colframe=black!55,
  fonttitle=\small\bfseries,boxrule=0.5pt,arc=1mm,
  left=2mm,right=2mm,top=1mm,bottom=1mm]
\begin{Verbatim}[fontsize=\footnotesize,breaklines=true,breakanywhere=true]
### User (for each retained step):
Earlier step {s} (previous agent's attempt):
[Step screenshot]

### Assistant:
Thought: {reason[:200]}
Action: {nl[:120]}

### User (handoff):
[Current screenshot]
{instruction}

The attempts above were made by a weaker agent and did not complete the task. You are taking over. Work from the CURRENT screenshot.
\end{Verbatim}
\end{tcolorbox}

\begin{tcolorbox}[
  title={Post-handoff checkpoint (intervention message)},
  colback=gray!3,colframe=black!55,
  fonttitle=\small\bfseries,boxrule=0.5pt,arc=1mm,
  left=2mm,right=2mm,top=1mm,bottom=1mm]
\begin{Verbatim}[fontsize=\footnotesize,breaklines=true,breakanywhere=true]
CHECKPOINT. Pause before acting.
Answer these three questions in at most 60 words total:
  1. What is the current state of the task, and what have your recent actions actually changed?
  2. Can the approach you are currently using complete this task, or has it stalled?
  3. If it can, say so and continue with it. If it cannot, say what you will do differently.
Then take your next action.
\end{Verbatim}
\end{tcolorbox}
\caption{Handoff and checkpoint prompts. Handoff retains up to six recent weak-agent steps by default; bracketed screenshot markers indicate image inputs, not literal prompt text. The neutral checkpoint requests reassessment without asserting that execution has stalled and explicitly permits continuing the current approach.}
\label{fig:handoff_checkpoint_prompts}
\end{figure*}

\paragraph{Execution context.}
The inherited OSWorld system prompt mentions shell and editor tools, although this harness exposes only the \texttt{computer} tool. A system suffix corrects the sudo password for the Docker environment. The diagnostic state query is logging-only, and the alternative \texttt{think\_reflect} prompt is not used by the hidden-state checkpoint.

\section{Case Study}
\label{app:case_study}

\paragraph{Runtime escalation case studies.}
Figure~\ref{fig:runtime_case_1}--\ref{fig:runtime_case_3} present three representative examples of runtime escalation. Across all three cases, the weak agent remains on an unproductive GUI-based trajectory before the runtime gate triggers handoff. After takeover, the strong agent changes the execution strategy rather than simply continuing the weak agent's actions~\citep{song2026coact}.

In the first case, the weak agent repeatedly interacts with an unresponsive macro interface while attempting to standardize text-box fonts. \emph{LEASH} hands off at step 11, after which the strong agent switches from the macro GUI to a terminal-based \texttt{python-pptx} solution and completes the task in 27 actions; the Manager-only run instead exhausts the 50-action budget.

In the second case, the weak agent repeatedly opens, inspects, and closes candidate photos. Handoff occurs at step 8, and the strong agent replaces this loop with keyboard navigation and multi-selection, successfully creating the requested archive in 27 actions. The corresponding Manager-only run creates an incorrect archive and receives a score of zero.

In the third case, the weak agent spends 23 steps browsing and searching without installing the requested LibreOffice extension. After handoff at step 24, the strong agent switches to a terminal-based installation strategy and successfully installs LanguageTool, whereas the Manager-only run exhausts the action budget without completing the installation.

Together, these examples illustrate that runtime escalation can identify stalled or inefficient execution trajectories and enable a strong agent to adopt qualitatively different strategies rather than merely continuing the weak agent's current path.

\begin{figure*}[t]
    \centering
    \includegraphics[page=1,width=0.9\textwidth]
    {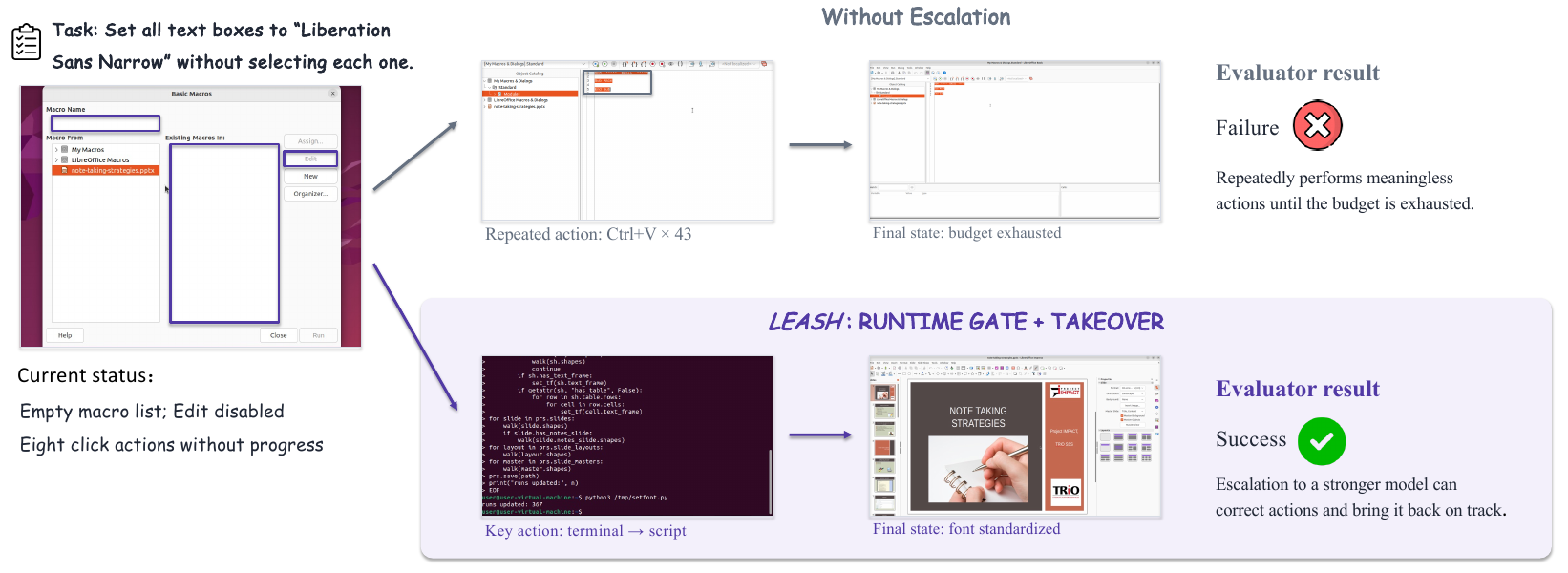}
    \caption{\textbf{Runtime escalation case study I: font standardization.} The weak agent becomes stuck in repeated macro-interface interactions. After handoff at step 11, the strong agent switches to a terminal-based \texttt{python-pptx} strategy and successfully completes the task.}
    \label{fig:runtime_case_1}
\end{figure*}

\begin{figure*}[t]
    \centering
    \includegraphics[page=2,width=0.9\textwidth]
    {figures/casestudy.pdf}
    \caption{\textbf{Runtime escalation case study II: photo selection.} The weak agent enters a repeated open--inspect--close loop. After handoff at step 8, the strong agent changes to keyboard navigation and multi-selection and successfully creates the requested archive.}
    \label{fig:runtime_case_2}
\end{figure*}

\begin{figure*}[t]
    \centering
    \includegraphics[page=3,width=0.9\textwidth]
    {figures/casestudy.pdf}
    \caption{\textbf{Runtime escalation case study III: extension installation.} The weak agent continues browsing without completing the installation. After handoff at step 24, the strong agent switches to a terminal-based installation strategy and successfully installs the requested extension.}
    \label{fig:runtime_case_3}
\end{figure*}

\end{document}